\newtheorem{lemma}{Lemma}
\newtheorem{corollary}{Corollary}
\newtheorem*{note*}{Note}
\newtheorem{proposition}{Proposition}
\newtheorem*{definitions}{Definitions}
\newcommand\mc[1]{\mathcal{#1}}
\newcommand\blfootnote[1]{%
  \begingroup
  \renewcommand\thefootnote{}\footnote{#1}%
  \addtocounter{footnote}{-1}%
  \endgroup
}
\newcommand{\st}{\text{ s.t. }}
\newcommand{\X}{\mc{X}}
\newcommand{\bX}{\overline{\mc{X}}}
\newcommand{\DX}{D_{\X}}
\newcommand{\bDX}{D_{\bX}}
\newcommand{\Y}{\mc{Y}}
\newcommand{\bY}{\overline{\mc{Y}}}
\newcommand{\DY}{D_\Y}
\newcommand{\bDY}{D_{\bY}}
\newcommand{\bF}{\overline{F}}
\newcommand{\propositionone}{
    \setcounter{proposition}{0}
    \begin{proposition}[Reliable Fooling Via Relaxation]\label{prop1}
    \hfill\break\vspace*{-2mm}
    Given:
    \begin{itemize}
        \item discrete spaces \(\X,\Y\) and continuous relaxations \(\bX,\bY\)
        \item generators \(F:\bX\to\bY,G:\bY\to\bX\) bijections satisfying \(F=G^{-1}\)
        \item discrete discriminators \({\DX},{\DY}\) both optimal for fixed \(F,G\)
        \item rediscretization functions \(R_\X,R_\Y\)
    \end{itemize}
    Suppose: \(F\) is approximately volume preserving in a small region about each \(x\in\X\). Consequently the same is true for \(G\) about each \(y\in\Y\).
    \\
    If: during training, we replace discrete random variables from $\X$ which lie in the continuous metric space $\bX$ with samples from regions about them.
    \\
    Then: the optimal relaxed discriminators \({\bDX}\) and \({\bDY}\) have a non-empty region about each \(x\in\X\) and \(y\in\Y\) where they are expected to assign values close to \({\DX}(x)\) and \({\DY}(y)\).
    \end{proposition}
}
\title{Unsupervised Cipher Cracking Using Discrete GANs}
\author{
  Aidan N. Gomez \quad
  Sicong Huang \quad
  Ivan Zhang \quad
  \textbf{Bryan M. Li} \quad
  \textbf{Muhammad Osama} \\
  Department of Computer Science, University of Toronto\\
  \texttt{\{aidan,sheldon,ivan,bryan,osama\}@for.ai} \\
  \AND
  \L{}ukasz Kaiser \\
  Google Brain \\
  \texttt{lukaszkaiser@google.com} \\
}
\begin{document}

\maketitle

\begin{abstract}
This work details CipherGAN, an architecture inspired by CycleGAN used for inferring the underlying cipher mapping given banks of unpaired ciphertext and plaintext. We demonstrate that CipherGAN is capable of cracking language data enciphered using shift and Vigen\`ere ciphers to a high degree of fidelity and for vocabularies much larger than previously achieved. We present how CycleGAN can be made compatible with discrete data and train in a stable way. We then prove that the technique used in CipherGAN avoids the common problem of uninformative discrimination associated with GANs applied to discrete data.
\blfootnote{Code available at: \href{https://github.com/for-ai/CipherGAN}{\texttt{github.com/for-ai/ciphergan}}}
\end{abstract}

\section{Introduction}

Humans have been encoding messages for secrecy since before the ancient Greeks, and for the same amount of time, have been fascinated with trying to crack these codes using brute-force, frequency analysis, crib-dragging and even espionage. Simple ciphers have, in the past century, been rendered irrelevant in favor of the more secure encryption schemes enabled by modern computational resources. However, the question of cipher-cracking remains an interesting problem since it requires an intimate understanding of the structure in a language. Nearly all automated cipher cracking techniques have had to rely on a human-in-the-loop; grounding the automated techniques in a human's preexisting knowledge of language to clean up the errors made by simple algorithms such as frequency analysis. Across a number of domains, the use of hand-crafted features has often been replaced by automatic feature extraction directly from data using end-to-end learning frameworks \citep{Goodfellow-et-al-2016}. The question to be addressed is as follows:
\begin{center}
\emph{Can a neural network be trained to deduce withheld ciphers from unaligned text, without the supplementation of preexisting human knowledge?}
\end{center}
The implications for such a general framework would be far-reaching in the field of unsupervised translation, where each language can be treated as an enciphering of the other. The decoding of the Copiale cipher \citep{knight2011copiale} stands as an excellent example of the potential for machine learning techniques to decode enciphered texts by treating the problem as language translation. The CycleGAN \citep{cyclegan} architecture is extremely general and we demonstrate our adaptation, CipherGAN, is capable of cracking ciphers to an extremely high degree of accuracy. CipherGAN requires little or no modification to be applied to plaintext and ciphertext banks generated by the user's cipher of choice.

In addition to presenting a GAN that can crack ciphers, we contribute the following techniques:
\begin{itemize}
    \item We show how to stabilize CycleGAN training: our CipherGAN achieves good performance in all training runs,
        compared to approximately 50\% of runs for the original CycleGAN.
    \item We provide a theoretical description and analysis of the uninformative discrimination problem that impacts GANs applied to discrete data.
    \item We introduce a solution to the above problem by operating in the embedding space and show that it works in practice.
\end{itemize}

\subsection{Shift and Vigen\`ere Ciphers}

The shift and Vigen\`ere ciphers are well known historical substitution ciphers. The earliest known record of a substitution cipher is believed to have dated back to 58 BCE, when Julius Caesar replaced each letter in a message with the letter that was three places further down the alphabet \citep{singh2000code}.
\begin{figure}[h]
\vspace*{-0.4cm}
{\scriptsize
    \begin{alignat*}{27}
    	&\text{Plain alphabet} \quad \quad &&a\quad&&b \quad &&c \quad &&d \quad &&e \quad &&f \quad &&g  \quad &&h \quad &&i \quad &&j \quad &&k \quad &&l \quad &&m \quad &&n \quad &&o \quad &&p \quad &&q \quad &&r \quad &&s \quad &&t \quad &&u \quad &&v \quad &&w \quad &&x \quad &&y \quad &&z\\
    	&\text{Shifted alphabet} &&D &&E &&F &&G &&H &&I &&J &&K &&L &&M &&N &&O &&P &&Q &&R &&S &&T &&U &&V &&W &&X &&Y &&Z &&A &&B &&C
    \end{alignat*}
}
\vspace*{-0.7cm}
\caption{An example of a right-shift-3 cipher.}
\label{fig-shift}
\end{figure}
\vspace*{-0.4cm}

Using Figure \ref{fig-shift}, the message ``\textit{attackatdawn}'' can be encrypted to ``\textit{DWWDFNDWGDZQ}''. This message can be easily deciphered by the intended recipient (who is aware of the particular shift number used) but looks meaningless to a third party. The shift cipher ensured secure communication between sender and receiver for centuries, until the ninth century polymath Al-Kindi introduced the concept of frequency analysis \citep{singh2000code}. He suggested that it would be possible to crack a cipher simply by analyzing the individual characters' frequencies. For instance, in English the most frequently occurring letters are `\textit{e}' (12.7\%), `\textit{t}' (9.1\%) and `\textit{a}' (8.2\%); whereas `\text{q}', `\textit{x}' and `\textit{z}' each have frequency of less than 1\%. Moreover, the code-breaker can also focus on bigrams of repeated letters; `\textit{ss}', `\textit{ee}', and `\textit{oo}' are the most common in English. This structure in language provides an exploit of efficiency to the code-breaker.

Polyalphabetic substitution ciphers, including the Vigen\`ere cipher, were introduced to inhibit the use of n-gram frequency analysis in determining the cipher mapping. Instead, the encrypter further scrambles the message by using a separate shift cipher for each element of a key that is tiled to match the length of the plaintext. Increasing the key length greatly increases the number of possible combinations and thus prevents against basic frequency analysis. In the mid nineteenth century, Charles Babbage recognized that the length of the used key could be determined by counting the repetitions and spacing of sequences of letters in the cipher \citep{singh2000code}. Using the determined length, we can then apply frequency analysis on the index of the cipher base. This method makes it possible to break the Vigen\`ere cipher, but is very time consuming and requires strong knowledge of the language itself.

There is a rich literature of automated shift-cipher cracking techniques \citep{ramesh1993automated,forsyth1993automated,hasinoff2003solving,knight2006unsupervised,verma2007genetic,raju2010decipherment,knight2011copiale} many of which achieve excellent results which is what one would expect from hand-crafted algorithms targeting specific ciphers and vocabularies. Work on automated cracking of polyalphabetic ciphers \citep{carroll1986automated,toemeh2008applying,omran2011cryptanalytic} has seen similar success on small vocabularies. It is a difficult matter to compare the results of previous work with our own as their focus ranges from inferring cipher keys \citep{carroll1986automated,ramesh1993automated,omran2011cryptanalytic}, to inferring the mappings given limited quantities of ciphertext (determining unicity distance) \citep{carroll1986automated,ramesh1993automated,hasinoff2003solving,verma2007genetic}, to analyzing the unicity distance required to solve small percentages of the cipher mappings (i.e. 20\% in \citet{carroll1986automated}). 

In comparison to these past works, we afford ourselves the advantage of an unconstrained corpus of ciphertext, however, we prescribe ourselves the following constraints: our model is not provided any prior knowledge of vocabulary element frequencies; and, no information about the cipher key is provided. Another complexity our work must overcome is our significantly larger vocabulary sizes; all previous work has addressed vocabularies of approximately 26 characters, while our model is capable of solving word-level ciphers with over 200 distinct vocabulary elements. As such, our methodology is notably `hands-off' in comparison to previous work and can be easily applied to different forms of cipher, different underlying data and unsupervised text alignment tasks.

\subsection{GANs and Wasserstein GANs}

Generative Adversarial Networks (GANs) are a class of neural network architectures introduced by \citet{gan} as an alternative to optimizing likelihood under a true data distribution. Instead, GANs balance the optimization of a generator network which attempts to produce convincing samples from the data distribution, and a discriminator which is trained to distinguish between samples from the true data distribution and the generator's synthetic samples. GANs have been shown to produce compelling results in the domain of image generation, but comparatively weak performance in domains using discrete data (discussed in Section \ref{discretegans}).

The original GAN discriminator objective as introduced in \citet{gan} is: 
\begin{equation}\label{gan}
D^* = \text{arg}\max_{D} \mathbb{E}_{x\sim \X}[\log D(x)] - \mathbb{E}_{z\sim \mathcal{Z}}[\log (1-D(F(z)))]
\end{equation}
Where \(F\) is the generator network and \(D\) is the discriminator network.
This loss is vulnerable to the problem of `mode collapse' where the generative distribution collapses to produce a generating distribution with low diversity. In order to more broadly distribute the mass, the Wasserstein GAN (WGAN) objective \citep{wasserstein} considers the set of K-Lipschitz discriminator functions \(D:X\rightarrow \mathbb{R}\) and minimizes the earth movers (1st Wasserstein) distance. The Lipschitz condition is enforced by clipping discriminators weights to fall within a predefined range.
\begin{equation}\label{wgan}
D^* = \text{arg}\max_{\|D\|_L\leq K} \mathbb{E}_{x\sim \X}[D(x)] - \mathbb{E}_{z\sim \mathcal{Z}}[D(F(z))]
\end{equation}
An improved WGAN objective, introduced by \citet{gulrajani2017improved}, enforced the Lipschitz condition using a Jacobian regularization term instead of the originally proposed weight-clipping solution. This resulted in more stable training, avoiding capacity under-use and exploding gradients, and improved network performance over weight-clipping.
\begin{equation}\label{impwgan}
\begin{split}
D^* = \text{arg}\max_{D} &\mathbb{E}_{x\sim \X}[D(x)] - \mathbb{E}_{z\sim \mathcal{Z}}[D(F(z))] +\\ &\qquad\alpha\cdot\mathbb{E}_{\hat{x}\sim \hat{\X}}[(\|\nabla_{\hat{x}}D(\hat{x})\|_2 -1)^2]
\end{split}
\end{equation}
Here \(\hat{\X}\) are samples taken along a line between the true data distribution \(\X\) and the generator's data distribution \(\X_g=\{F(z)|z\sim \mathcal{Z}\}\).

\subsection{CycleGAN}\label{sec:cyc}

CycleGAN \citep{cyclegan} is a generative adversarial network designed to learn a mapping between two data distributions without supervision. Three separate works \citep{cyclegan,dualgan,liu2017unsupervised} share many of the core features we describe below, however, for simplicity we will refer to CycleGAN as the basis for our work as it is the most similar to our model. It acts on distributions \(\X\) and \(\Y\) by using two mapping generators: \(F:\X \rightarrow \Y\) and \(G:\Y \rightarrow \X\); and two discriminators: \(\DX:\X \rightarrow [0,1]\) and \(\DY:\Y \rightarrow [0,1]\).

CycleGAN optimizes the standard GAN loss \(\mathcal{L}_{\text{GAN}}\):
\begin{equation}
\mathcal{L}_{\text{GAN}}(F,\DY,\X,\Y) = \mathbb{E}_{y\sim \Y}[\log \DY(y)] + \mathbb{E}_{x\sim \X}[\log(1-\DY(F(x)))]
\end{equation}
While also considering a reconstruction loss, or `cycle' loss \(\mathcal{L}_{\text{cyc}}\):
\begin{equation}
\mathcal{L}_{\text{cyc}}(F,G,\X,\Y) = \mathbb{E}_{x\sim \X}[\|G(F(x))-x\|_1] + \mathbb{E}_{y\sim \Y}[\|F(G(y))-y\|_1] 
\end{equation}
Taken together the losses are balanced using a hyperparameter \(\lambda\):
\[
\mathcal{L}(F,G,\DX,\DY,\X,\Y) = \mathcal{L}_{\text{GAN}}(F,\DY,\X,\Y) + \mathcal{L}_{\text{GAN}}(G,\DX,\Y,\X) + \lambda \cdot \mathcal{L}_{\text{cyc}}(F,G,\X,\Y) 
\]
This leads to the training objectives:
\begin{align*}
F^* &= \text{arg}\min_{F} \mathcal{L}_{\text{cyc}}(F,G,\X,\Y) + \mathcal{L}_{\text{GAN}}(F,\DY,\X,\Y) \\
G^* &= \text{arg}\min_{G} \mathcal{L}_{\text{cyc}}(F,G,\X,\Y) + \mathcal{L}_{\text{GAN}}(G,\DX,\Y,\X)\\
D^*_\X &= \text{arg}\max_{\DX} \mathcal{L}_{\text{GAN}}(G,\DX,\Y,\X) \\
D^*_\Y &= \text{arg}\max_{\DY} \mathcal{L}_{\text{GAN}}(F,\DY,\X,\Y)
\end{align*}

CycleGAN uses \(\mathcal{L}_{\text{cyc}}\) to avoid mode collapse by preserving reconstruction of mapping inputs from outputs. It has demonstrated excellent results in unpaired image translation between two visually similar categories. Our architecture is the first example of this unsupervised learning framework being successfully applied to discrete data such as language.

\section{Discrete GANs}\label{discretegans}

\begin{figure}
\centering
\includegraphics[trim={2cm 2cm 1.9cm 2cm},clip,scale=0.25]{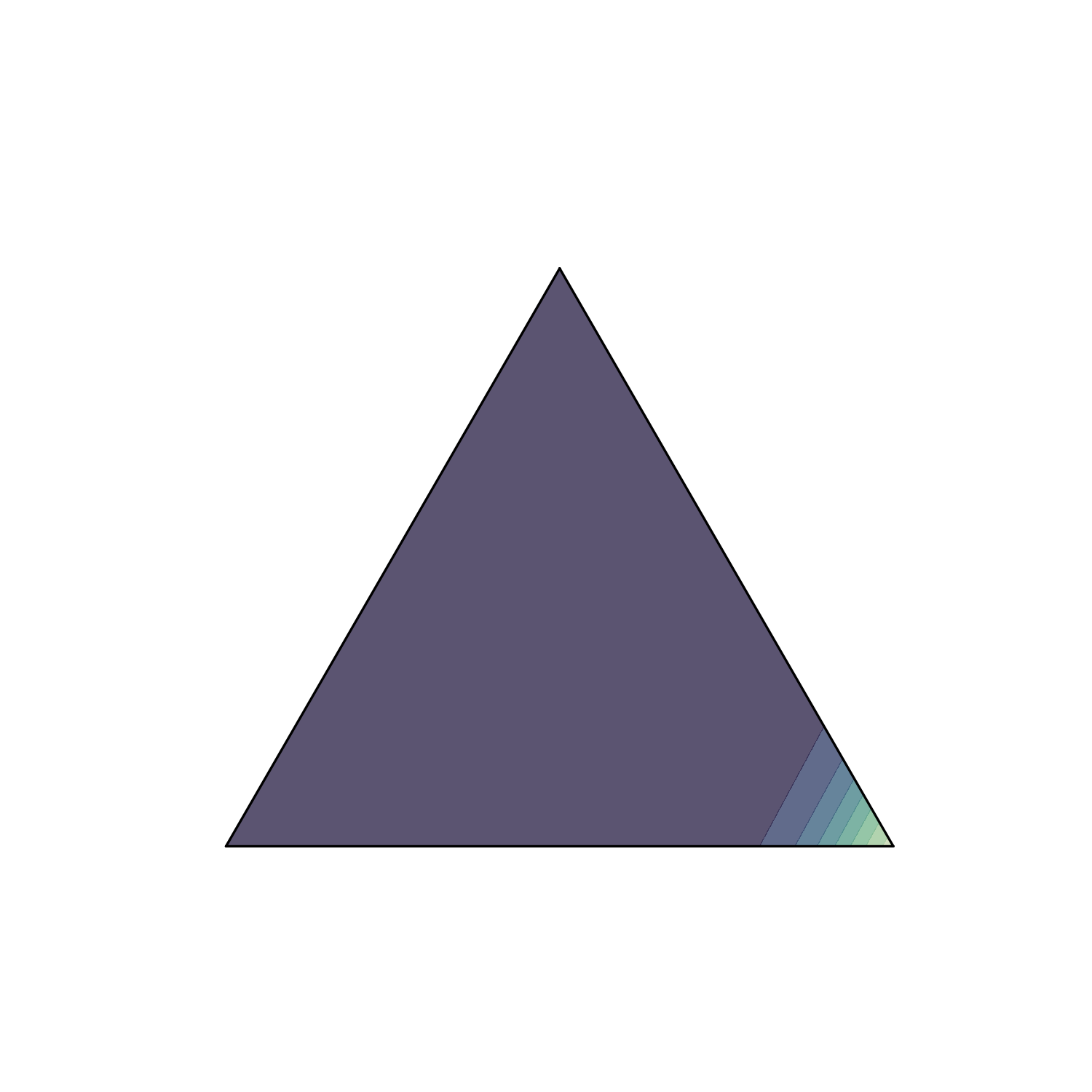}\hspace{5mm}
\includegraphics[trim={2cm 2cm 1.9cm 2cm},clip,scale=0.25]{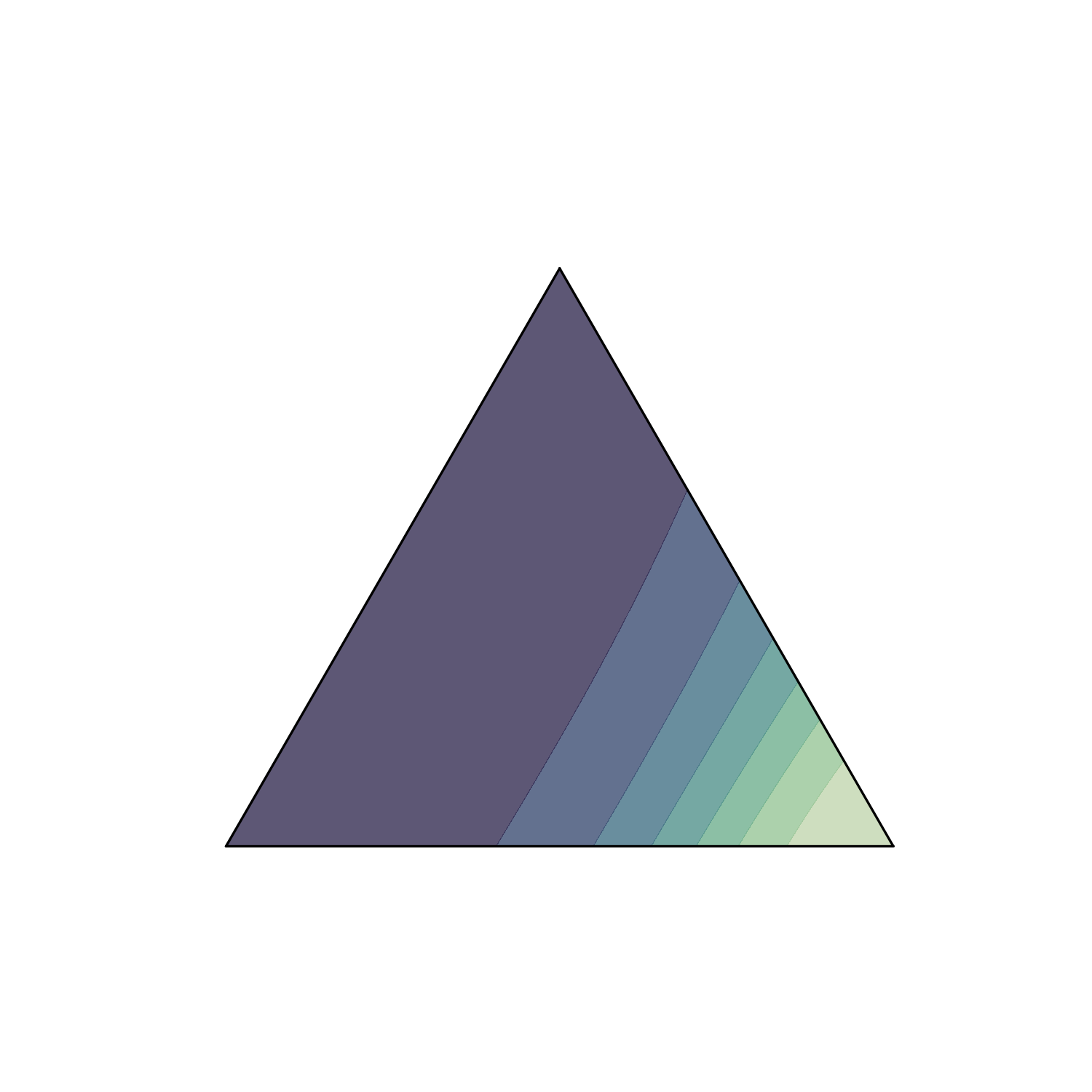}\hspace{5mm}
\includegraphics[trim={2cm 2cm 1.9cm 2cm},clip,scale=0.25]{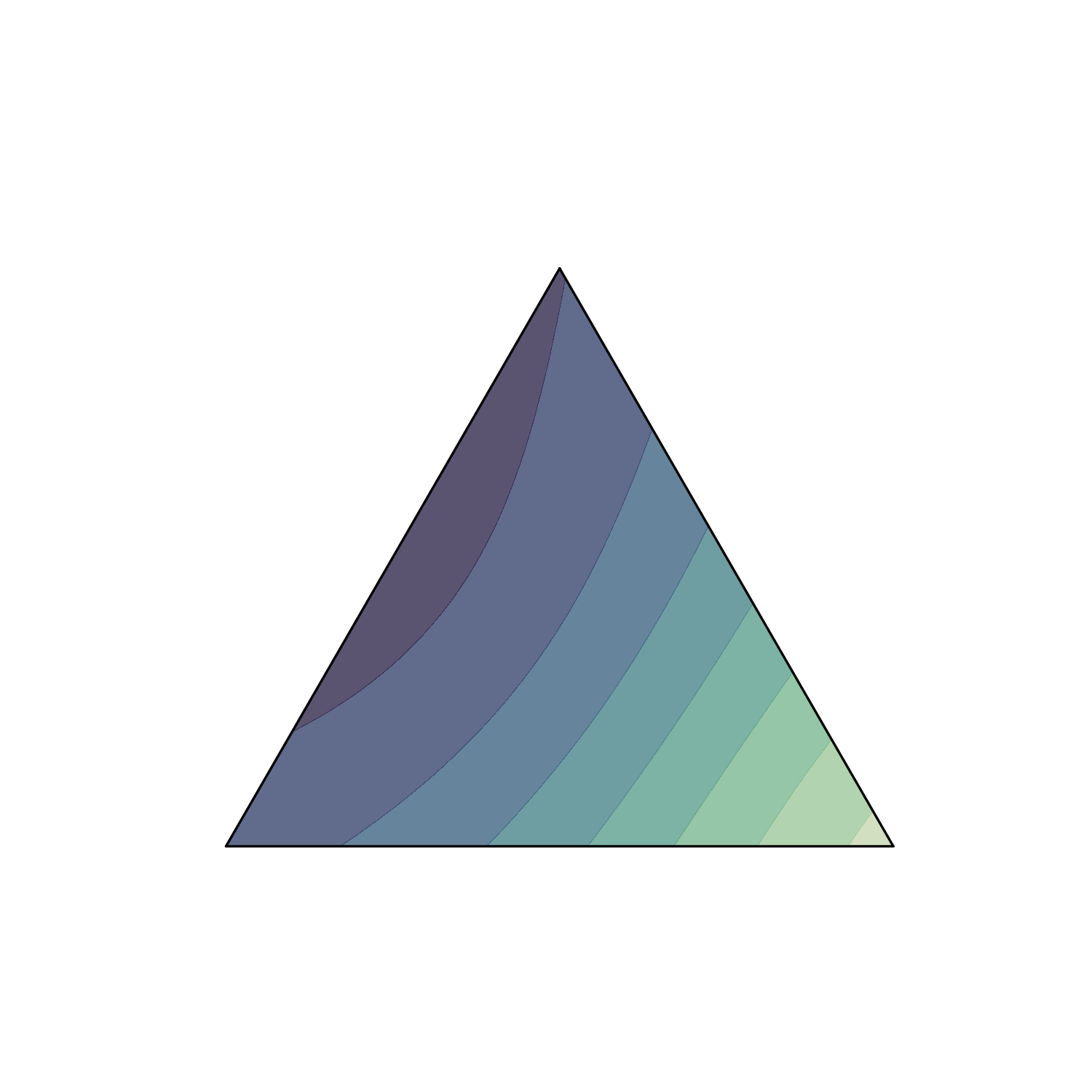}
\vspace*{-0.5cm}
\caption{Discriminators trained on the toy example of recognizing the bottom-right corner of a simplex as true data. From left to right the discriminators were regularized using: nothing; WGAN Jacobian norm regularization; and, the relaxed sampling technique.}
\label{fig-simplex}
\end{figure}

Applying GANs to discrete data generation is still an open research problem that has seen great interest and development. The primary difficulty with training discrete data generators in a GAN setting is the lack of a gradient through a discrete node in the computation graph. The alternatives to producing discrete outputs - for instance, generators producing a categorical distribution over discrete elements - are prone to \textit{uninformative discrimination} (described below), in that, the discriminator may use an optimal discrimination criterion that is unrelated to the correctness of the re-discretized generated data. In our example of a continuous distribution over discrete elements, the produced samples all lie within the standard simplex \(\Delta^k\) with dimension \(k\) equal to the number of elements in the distribution. In this case, samples from the true data distribution always lie on a vertex \(\mathbf{v}_i\) of the simplex, while any sub-optimal generator will produce samples within the simplex's interior \(\Delta^k\backslash \{\mathbf{v}_1,...,\mathbf{v}_k\}\). In this example, a discriminator which performs uninformative discrimination might evaluate a sample's membership in the vertices of the simplex as an optimal discrimination criterion, which is entirely uninformative of the correctness of re-discretized samples from the generator.

A number of solutions to training generators with discrete outputs have been proposed: SeqGAN \citep{yu2017seqgan} uses the REINFORCE gradient estimate to train the generator; Boundary-seeking GANs \citep{hjelm2017boundary} and maximum-likelihood augmented GANs \citep{che2017maximum} proposed a gradient approximation with low bias and variance that resembles the REINFORCE \citep{williams1992simple} estimator. Gumbel-softmax GANs \citep{kusner2016gans} replace discrete variables in the simplex with continuous relaxations called Concrete \citep{maddison2016concrete} or Gumbel-softmax \citep{jang2016categorical} variables; WGANs \citep{wasserstein} were suggested as a remedy to the uninformative discrimination problem by ensuring that the discriminator's rate of change with respect to its input is bound by some constant.

Our work utilizes both the Wasserstein GAN \citep{gulrajani2017improved,wasserstein} and relaxation of discrete random variables (such as Concrete/Gumbel-softmax). It has been noted multiple times in implementations of CycleGAN as well as in the original paper itself that the architecture was sensitive to initialization and requires repeat attempts in order to converge to a satisfactory mapping \citep{cycleganattempt0,cycleganattempt1}. Our architecture suffered the same instability before the WGAN Jacobian norm regularization term was added to the discriminator's loss. In addition, we found that having the discriminator operate over embedding space instead of directly over softmax vectors produced by our generator has improved performance. 

Our hypothesis, which is justified by Proposition~\ref{prop1} below, is that the embedding vectors may act as continuous relaxations of discrete random variables as small, noisy updates are applied throughout training; Proposition \ref{prop1} asserts that by replacing discrete random variables with continuous ones, our discriminator is prevented from arbitrarily approximating a Dirac delta distribution. Figure \ref{fig-simplex} shows simple discriminators trained on the toy task of identify a single vertex of a simplex as true data; it is clear that a lack of regularization leads to the discriminator collapsing to the vertex of the simplex, leaving approximately zero gradient everywhere; while the Jacobian regularization of Wasserstein GANs leads to the space covered by lines leading from the true data vertex to the generated data having a lower rate of change (note that the gradient is still close to zero in the remaining area of the simplex); and finally, replacing the discrete random variables of the true data with continuous samples about the vertex results in a much more gradual transition, which is desirable since it provides a stronger gradient signal from which to learn. 

Unique to CycleGAN is the auxiliary cycle loss described in Section~\ref{sec:cyc}. 
The effect of this additional objective is the generated samples regularly being forced away from the discriminator's minimum in favor of a mapping that better-satisfies reconstruction. 
For instance, it may be the case that the discriminator favors a particular cipher mapping that is not bijective; in this case, the model will receive a strong signal from the cycle loss \textit{away} from the discriminator's minimum. 
In these cases where the model moves against the gradient it receives from the discriminator it may be the case that this region has near zero curvature (as is visually discernible from Figure~\ref{fig-simplex}); this is because the WGAN curvature regularization (see Equation~\ref{impwgan}) has not been applied in this region.

`Curvature' here refers to the curvature of the discriminator's output with respect to its input; this curvature determines the strength of the training signal received by the generator. Low curvature means little information for the generator to improve itself with.
This motivates the benefits of having strong curvature globally, as opposed to linearly between the generators samples and the true data. \citet{kodali2017train} proposes regularizing in all directions about the generated samples, which would likely remedy the vanishing gradient in our case as well; for our experiments, the relaxed sampling technique proved effective. It should also be noted that \citet{luc2016semantic} propose something similar to relaxed sampling whereby they replace the ground-truth discrete tokens with a distribution over the vocabulary that distributes some of the mass across the remaining incorrect tokens.

Let us now introduce the definitions needed for the formal presentation of Proposition~\ref{prop1}.

\begin{definitions}
\hfill\break\vspace*{-5mm}
\begin{itemize}
    \item {\normalfont(Continuous relaxation of a discrete set).} 
        A continuous relaxation of a discrete set \(\X\) is a proper, path-connected metric space \(\bX\) satisfying \(\X\subset\bX\). 
    \item {\normalfont(Rediscretization function).} 
        A rediscretization function is an injective function \(R:\bX\to\X\) from a continuous relaxation \(\bX\) of discrete space \(\X\) satisfying \(\forall x\in\X,\exists\epsilon>0\st R\equiv x\text{ on }B_\epsilon[x]\). Note that \(R\) defines an equivalence relation in \(\bX\).
    \item {\normalfont(Uninformative Discrimination).}
        A discriminator \(\DX\) is said to     perform uninformative discrimination under rediscretization function \(R\) if: \(\exists x\in\X,     \bar{x}\in\bX \st (R(\bar{x})=x)\,\land\,(\DX(x)\not\approx\DX(\bar{x}))\).
    \item {\normalfont(Continuous relaxation of a function).}
        A continuous relaxation of a function over discrete sets \(F:\X\to\Y\) is another function \(\bF:\bX\to\bY\) (where \(\bX,\bY\) are continuous relaxations of \(\X,\Y\)) such that \(\bF\) is continuous and \(\bF(x)=F(x),\forall x\in\X\).
\end{itemize}

\end{definitions}

The following proposition (proved in the Appendix) forms the theoretical basis of the technique.

\propositionone

Figure~\ref{fig:timing} compares a model trained with embedding vectors versus one with only the softmax outputs. It becomes clear on a harder task, such as Vigen\`ere, that the embeddings vastly outperforms softmax in terms of speed of convergence and final accuracy; however we found that the simpler task of a shift cipher showed little difference between embeddings and softmax, suggesting an increase in task complexity increases the benefits provided by the stronger gradient signal of embeddings.

\section{Method}

\subsection{CipherGAN}

GANs applied to text data have yet to produce truly convincing results \citep{kawthekarevaluating}. Previous attempts at discrete sequence generation with GANs have generally utilized a generator outputting a probability distribution over the token space \citep{gulrajani2017improved,yu2017seqgan,hjelm2017boundary}. This leads to the discriminator receiving a sequence of discrete random variables from the data distribution, and a sequence of continuous random variables from the generator distribution; making the task of discrimination trivial and uninformative of the underlying data distribution. In order to avoid such a scenario, we perform all discrimination within the embedding space by allowing the generator's output distribution to define a convex combination of corresponding embeddings. This leads to the following losses:
\begin{align*}
\begin{split}
\mathcal{L}_{\text{GAN}}(F,D_\Y,\X,\Y) =\ &\mathbb{E}_{y\sim \Y}[\log D_\Y(y \cdot W_{Emb}^\top)]\\
&\qquad+\mathbb{E}_{x\sim \X}[\log(1-D_\Y(F(x \cdot W_{Emb}^\top) \cdot W_{Emb}^\top))]
\end{split}
\\
\begin{split}
\mathcal{L}_{\text{cyc}}(F,G,\X,\Y) =\ &\mathbb{E}_{x\sim \X}[\|G(F(x \cdot W_{Emb}^\top) \cdot W_{Emb}^\top)-x\|_1]\\
&\qquad+\mathbb{E}_{y\sim \Y}[\|F(G(y \cdot W_{Emb}^\top) \cdot W_{Emb}^\top)-y\|_1]
\end{split}
\end{align*}
We perform an inner product between the embeddings \(W_{Emb}\) and the one-hot vectors in \(x\) as well as between the embeddings and the softmax vectors produced by generators \(F\) and \(G\). The former is equivalent to a lookup operation over the table of embedding vectors, while the latter is a convex combination between all vectors in the vocabulary. The embeddings \(W_{Emb}\) are trained at each step to minimize \(\mathcal{L}_{\text{cyc}}\) and maximize \(\mathcal{L}_{\text{GAN}}\), meaning the embeddings are easily mapped from and are easy to discriminate. As was discussed in Section \ref{discretegans}, training with the above loss functions was unstable, with approximately three of every four experiments failing to produce compelling results. This is a problem we observed with the original CycleGAN horse-zebra experiment, and one that has been noted by multiple re-implementations online \citep{cycleganattempt0, cycleganattempt1}. We were able to significantly increase the stability by training the discriminator loss along with the Lipschitz conditioning term from the improved Wasserstein GAN \citep{gulrajani2017improved} (see Equation~\ref{impwgan} and \citet{fedus2017many}), resulting in the following loss (DualGAN \citep{dualgan} opted to use weight-clipping to enforce the Lipschitz condition):
\begin{equation*}
\begin{split}
\mathcal{L}_{\text{GAN}}(F,D_\Y,\X,\Y) &= \mathbb{E}_{y\sim \Y}[D_\Y(y \cdot W_{Emb}^\top)]\\
&\qquad- \mathbb{E}_{x\sim \X}[D_\Y(F(x \cdot W_{Emb}^\top) \cdot W_{Emb}^\top)]\\
&\qquad+ \alpha\cdot\mathbb{E}_{\hat{y}\sim \hat{\Y}}[(\|\nabla_{\hat{y}}D_\Y(\hat{y})\|_2 -1)^2]
\end{split}
\end{equation*}
As a consequence of Proposition \ref{prop1}, discriminators trained on non-stationary embeddings will be unable to approximate Dirac delta distributions to arbitrary accuracy; implying there are dedicated `safe-zones' about members of \(\X\) where the generator can reliably fool the discriminator and uninformative discrimination is prevented. 

In our experiments, we jointly train the embedding vectors as parameters of the model. The gradient updates applied to these vectors introduces noise between training iterations; we observed that embedding vectors tend to remain in a bounded region after the initial steps of training. We found that simply replacing the data with embedding vectors had a similar effect to performing the random sampling described in Proposition~\ref{prop1} (see Figure~\ref{fig:timing}).

\section{Experiments}

\begin{table}[t]
  \centering
  \begin{tabular}{lll}
  \toprule
    \textbf{Work} & \textbf{Ciphertext Length} & \textbf{Accuracy}\\
    \midrule
    \citet{hasinoff2003solving} & 500 & \(\sim 97\)\%\\
    \citet{forsyth1993automated} & 5000 & \(\sim 100\)\%\\
    \citet{ramesh1993automated} & 160 & \(\sim 78.5\)\%\\
    \citet{verma2007genetic} & 1000 & \(\sim 87\)\%\\
  \bottomrule
  \end{tabular}\\[4mm]
  \vspace*{-0.5cm}
  \caption{Previous results on automated shift cipher cracking with limited ciphertext length.}
  \label{tab-prev}
\end{table}

\subsection{Data}\label{data}

Our experiments use plaintext natural language samples from the Brown English text dataset \citep{francis1979brown}. We generate \(2*\texttt{batch\_size}\) plaintext samples, the first half are fed as the CycleGAN's \(\X\) distribution and the second half is passed through the cipher of choice and fed as the \(\Y\) distribution.

For our natural language plaintext data we used the Brown English-language corpus which consists of over one million words in 57340 sentences. We experiment with both word-level "Brown-W" and character-level "Brown-C" vocabularies. For word-level vocabularies, we control the size of the vocabulary by taking the top \(k\) most frequent words and introducing an `unknown' token which we use to replace all words that are not within the taken vocabulary. We demonstrate our method's ability to scale to large vocabularies using the word-level vocabularies; more modern enciphering techniques rely on large substitution-boxes (S-boxes) with many (often hundreds of) elements.

\subsection{Training}\label{training}

As in \citet{cyclegan} we replace the log-likelihood loss with a squared difference loss which was originally introduced by \citet{mao2016multi}. The original motivation for this replacement was improved stability in training and avoidance of the vanishing gradients problem. In this work we found the effect on training stability substantial.
\[
\begin{split}
\mathcal{L}_{\text{GAN}}(F,D_\Y,\X,\Y) =\ &\mathbb{E}_{y\sim \Y}[(D_\Y(y \cdot W_{Emb}^\top))^2]\\
&+\mathbb{E}_{x\sim \X}[(1-D_\Y(F(x \cdot W_{Emb}^\top) \cdot W_{Emb}^\top))^2]\\
&+\alpha\cdot\mathbb{E}_{\hat{y}\sim \hat{\Y}}[(1-\|\nabla_{\hat{y}}D_\Y(\hat{y})\|_2)^2]
\end{split}
\]
Hence, our total loss is:
\[
\begin{split}
\mathcal{L}_{\text{Total}}(F,G,D_\X,D_\Y,\X,\Y) =
&\mathcal{L}_{\text{cyc}}(F,G,\X,\Y)\\ &+\mathcal{L}_{\text{GAN}}(F,D_\Y,\X,\Y)\\ &+\mathcal{L}_{\text{GAN}}(G,D_\X,\X,\Y)
\end{split}
\]

We adapted the convolutional architecture for the generator and discriminator directly from \citet{cyclegan}. We simply replace all two dimensional convolutions with the one dimension variant and reduce the filter sizes in our generators to 1 (pointwise convolutions). Convolutional neural networks have recently been shown to be highly effective on language tasks and can speed up training significantly \citep{zhang2015text,bytenet2016,yu2017seqgan}. Both our generators and discriminators receive a sequence of vectors in embedding space; our generators produce a softmax distribution over the vocabulary, while our discriminator produces a scalar output. For all our experiments we use a cycle loss with regularization coefficient \(\lambda=1\).
In order to be compatible with the WGAN we replace batch normalization \citep{batchnorm} with layer normalization \citep{layernorm}. We train using the Adam optmizer \citep{adam} with batch size \(64\) and learning rate \(2e-4\), $\beta_1=0$ and $\beta_2=0.9$. Our learning rate is exponentially warmed up to \(2e-4\) over \(2500\) steps, and held constant thereafter. We use learned embedding vectors with 256 dimensions. The WGAN Lipschitz conditioning parameter was set to \(\alpha=10\) as was prescribed in \citet{gulrajani2017improved}.

\begin{figure}[t]
  \centering
  \includegraphics[scale=0.30]{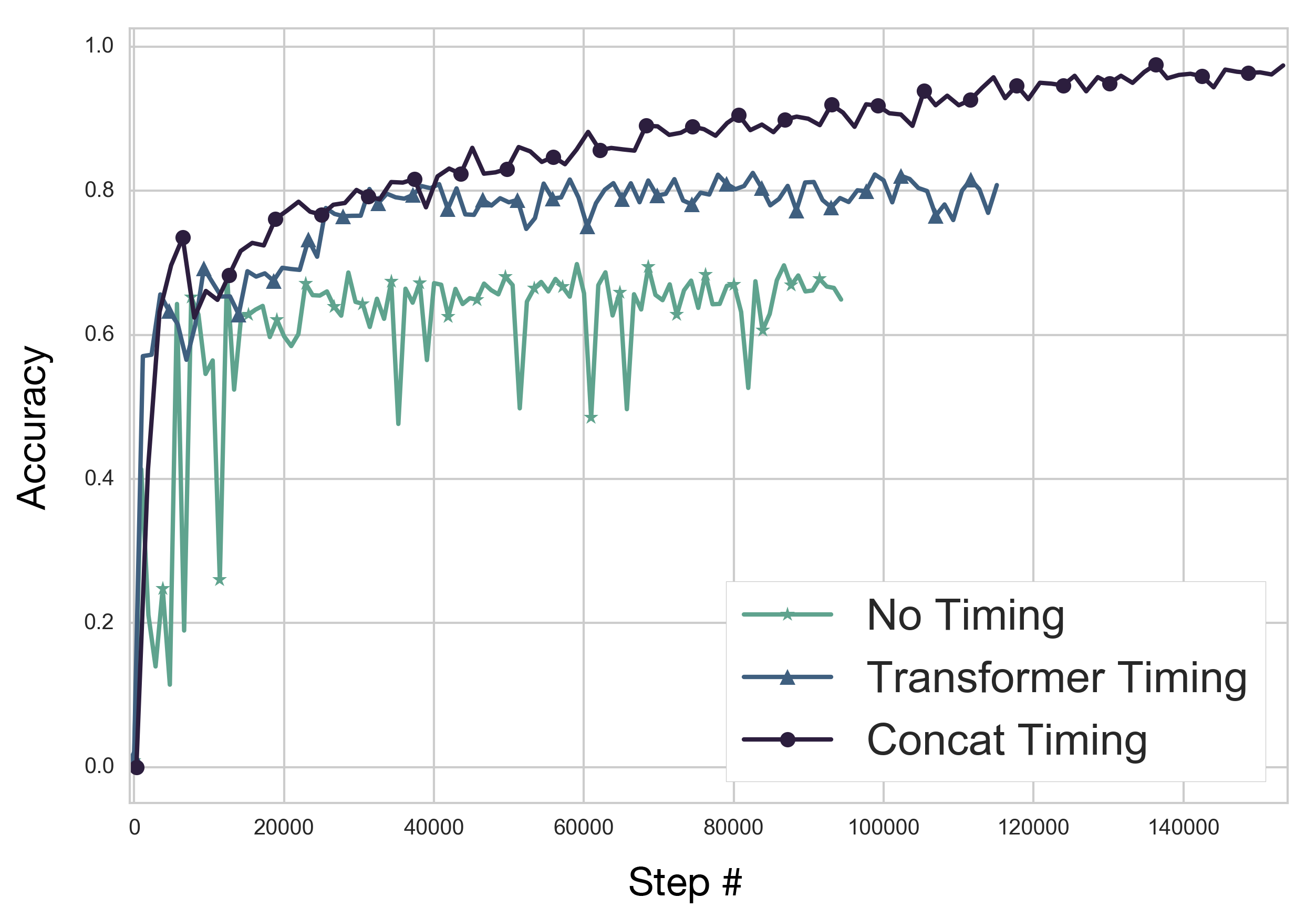}
  \includegraphics[scale=0.30]{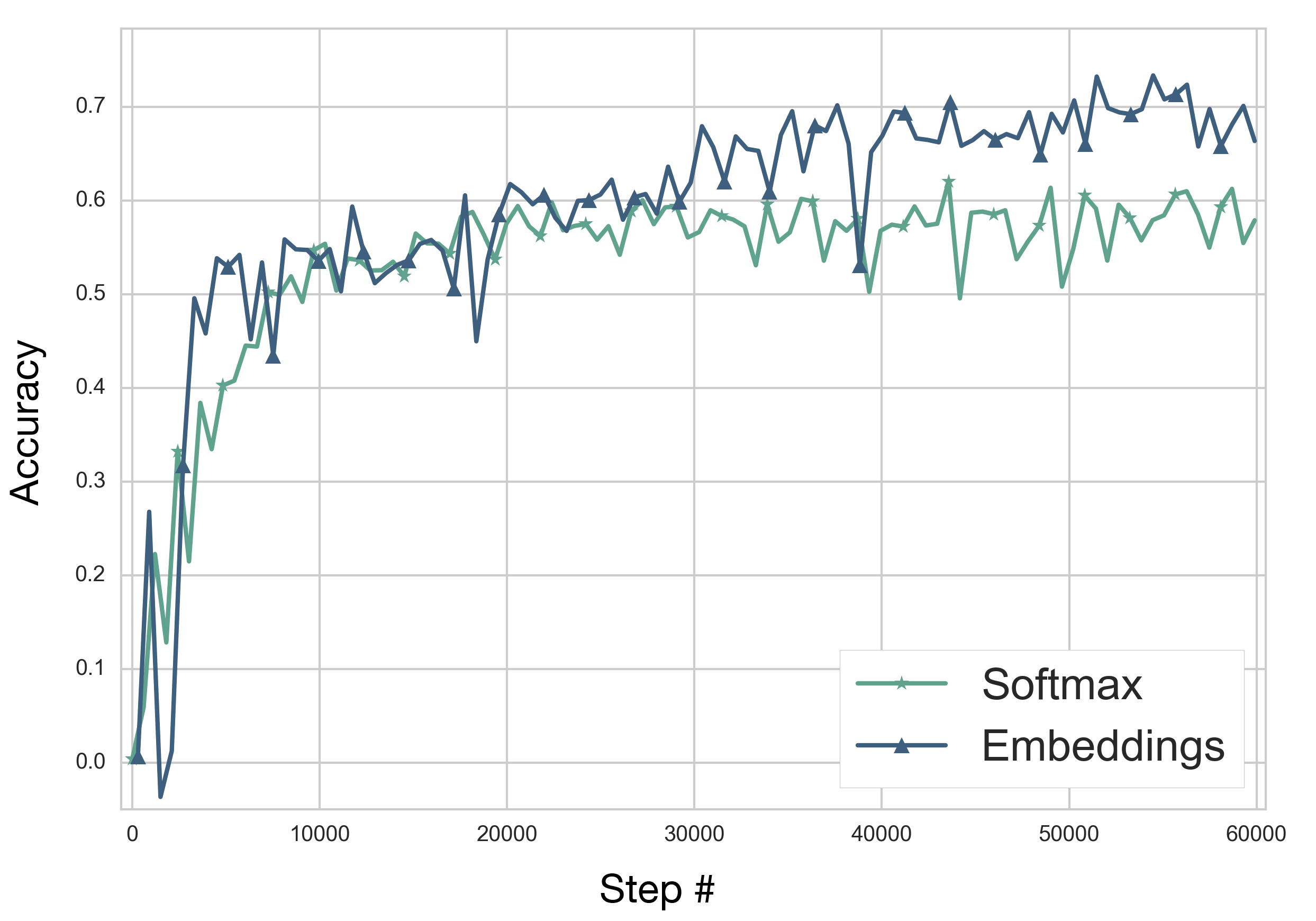}
  \vspace*{-0.3cm}
  \caption{Left: Comparison of different timing techniques for Brown-C Vigen\`ere. Right: Comparison of embedding vs. raw softmax on Brown-W with vocab size of 200.}
  \label{fig:timing}
\end{figure}

For the Vigen\`ere cipher, positional information is critical to the network being able to perform the mapping. In order to facilitate this we experimented with adding the timing signal described in \citet{transformer} ("Transformer Timing" in Figure~\ref{fig:timing}) and found that performance increased relative to no explicit timing signal; we found that the best option was concatenating a learned positional embedding vector specific to each position onto the sequence ("Concat Timing" in Figure~\ref{fig:timing}), this dramatically improved performance, however this means that the architecture can not generalize to sequences longer than those in the training set. A potential solution to the issue of generalizing to longer sequences would be making a 'soft' choice at each position for which positional embedding vector to concatenate using a softmax distribution over a set of embedding vectors larger than the expected key length, however, we leave this to future work.

\subsection{Discussion}\label{results}

\begin{table}[t]
  \centering
  \begin{tabular}{lccc|cccccc}
  \toprule
    \textbf{Data} & Brown-W & Brown-W & Brown-C & \multicolumn{2}{c}{Freq. Analysis (With Key)}\\
    \textbf{Vocab size} & 10 & 200 & 58 & 58 & 200 \\
    \midrule
    Cipher & \multicolumn{5}{c}{\textbf{Shift/Permutation}}\\
    \cmidrule{2-6}
    Acc. & 100\% & 98.7\% & 99.8\% & 80.9\% & 44.5\% \\
  \midrule
    Cipher & \multicolumn{5}{c}{\textbf{Vigen\`ere (Key: ``345'')}}\\
    \cmidrule{2-6}
    Acc. & 99.7\% & 75.7\% & 99.0\% & 9.6\% (78.1\%) & \(<\)0.1\% (44.3\%) \\
  \bottomrule
  \end{tabular}\\[4mm]
  \vspace*{-0.5cm}
  \caption{Average proportion of characters correctly mapped in a given sequence. The ``Freq. Analysis" column is simple frequency analysis applied to the same corpus our model observes. For Vigen\`ere we also show the score if the key were known (note: the key is left unknown to our model).}
  \label{tab-results}
\end{table}

Table~\ref{tab-results} shows that CipherGAN was able to solve shift ciphers to near flawless accuracy, with all three vocabulary sizes being easily decoded by the model. CipherGAN performs extremely well on Vigen\`ere, achieving excellent results on the character-level cipher and strong results on the challenging word-level cipher with a vocabulary size of 200. The vocabulary size of 58 for our character level, containing punctuation and special characters, is more than double what has been previously explored. In comparison to the original CycleGAN architecture, we found CipherGAN to be extremely consistent in training and notably insensitive to the random initialization of weights; we attribute this stability to the Jacobian norm regularization term. 

For both ciphers, the first mappings to be correctly determined were those of the most frequently occurring vocabulary elements, suggesting that the network does indeed perform some form of frequency analysis to distinguish outlier frequencies in the two banks of text. Another interesting observation is that of the mistakes made by the network: the network would frequently confuse punctuation marks with one another, perhaps suggesting that these vocabulary elements' skip-gram signatures were similar enough to lead to the repeated confusion observed across many training runs.

\section{Conclusion}

CipherGAN is a compelling demonstration of the potential generative adversarial networks hold to act on discrete data to solve difficult tasks that rely on an extremely sensitive and nuanced discrimination criterion. Our work serves to redouble the promise of the CycleGAN architecture for unsupervised alignment tasks for multiple classes of data. CipherGAN presents an algorithm that is both stable and consistent in training, improving upon past implementations of the CycleGAN architecture. Our work theoretically motivates -- and empirically confirms -- the use of continuous relaxations of discrete variables, not only to facilitate the flow of gradients through discrete nodes, but also to prevent the oft-observed phenomena of uninformative discrimination. CipherGAN is highly general in its structure and can be directly applied to a variety of unsupervised text alignment tasks, without excess burden of adaptation. On the one hand,  CipherGAN is an early step towards the goal of unsupervised translation between languages and has shown excellent performance on the simplified task of cipher map inference. On the other hand, the methods we introduce can be used more broadly in the field of text generation with adversarial networks.

\subsubsection*{Acknowledgments}

Our thanks goes to Roger Grosse and Kelvin Shuangjian Zhang for their advice and support throughout. We also thank Otavio Good and Ian Goodfellow for meaningful early discussions and direction; as well as Michal Wiszniewski for his assistance in developing the code upon which the experiments were run. This work was made possible thanks to the AI Grant, which generously supported this work.

\bibliographystyle{iclr2018_conference}
\bibliography{references}

\newpage
\section*{Appendix}
\renewcommand{\thesubsection}{\Alph{subsection}}
\renewcommand{\thesubsubsection}{\thesubsection.\arabic{subsubsection}}

\subsection{Architectural Details}

For both the generator and discriminator architectures below, we assume the network is provided with a sequence of \(N\) tokens denoted \texttt{data} laying in a \(K\)-simplex. Therefore \texttt{data} is an \(N\) by \(K\) matrix.

For simplicity we define the following notation:
\begin{align*}
    \text{ConvLayer}_{fc,fs=1,s=1}(x) &= ReLU(LayerNorm(Conv1D_{fc,fs}(x)))\\
    \text{ResBlock}_{fc,fs=1}(x) &= ReLU(LayerNorm(x+Conv1D_{fc,fs}(Conv1D_{fc,fs}(x))))\\
    \begin{split}
    \text{ConvStack}_{n,fc,fs=1}(x) &=\\ \text{ConvLayer}_{1,fs}\circ&\,\text{ConvLayer}_{2^{1}\cdot fc,fs,2}\circ\cdots\circ\text{ConvLayer}_{2^{n}\cdot fc,fs,2}\circ\text{ConvLayer}_{fc,fs,2}(x)
    \end{split}
\end{align*}

\subsubsection{Generator}

We define the following constants:
\begin{itemize}
    \item \(fc=32\): The base 'filter count`, or number of filters in a convolution layer
    \item \(fs=1\): The 'filter size`, or width of weight kernel used in convolution layers
    \item \(vs=1\): The 'vocab size`, or number of elements in the vocabulary
    \item \(s=1\): The stride of the convolution layers
    \item \(E=100\): The dimensionality of the embedding vectors
    \item \(T=100\): The dimensionality of the concat timing weights
\end{itemize}

First, we first look up embedding vectors corresponding to the observed tokens. This is performed as a simple inner-product between \texttt{data} and a learning weight matrix of embeddings \(W_{\text{Emb}}\in\mathbb{R}^{K\times E}\):
\[x=\texttt{data}\cdot W_{\text{Emb}}\]

Next, a timing signal is added using either:
\begin{itemize}
    \item The Transformer method: 
    \begin{align*}
        \texttt{signal}_{n,2k}&=\sin(n/1e5^{2k/K})\\
        \texttt{signal}_{n,2k+1}&=\cos(n/1e5^{2k/K})\\
        \texttt{timing}(x)&=x+\texttt{signal}
    \end{align*}
    \item The concat method:
    \begin{align*}
        &W_{\text{time}}\in\mathbb{R}^{N\times T}\\
        &t=\texttt{timing}(x)=[x\|_2W_{\text{time}}]
    \end{align*}
    Where \(W_{\text{time}}\) are trained parameters and \([\cdot\|_k\cdot]\) denotes concatenation along the \(k^{\text{th}}\) axis.
\end{itemize} 

Then the generator is defined as follows:
\begin{align*}
    a(x) &= \text{ConvLayer}_{4\cdot fc}(\text{ConvLayer}_{2\cdot fc,fs}(\text{ConvLayer}_{fc}(x)))\\
    b(x) &= \text{ResBlock}_{4\cdot fc}^{(5)}(x)\\
    \texttt{out}(t) &= \text{Softmax}(\text{ConvLayer}_{vs}(b(a(t))))
\end{align*}

\subsubsection{Discriminator}

We define the following constants:
\begin{itemize}
    \item \(fc=32\): The base 'filter count`, or number of filters in a convolution layer
    \item \(fs=15\): The 'filter size`, or width of weight kernel used in convolution layers
    \item \(n=5\): The depth of the convolutional stack
\end{itemize}

Similar to the generator, a timing signal is added first. Leading to the following discriminator:
\begin{align*}
    \texttt{out}(t) &= \text{ConvStack}_{n,fc,fs}(\text{dropout}_{0.5}(t))
\end{align*}

\subsection{Proof of Proposition~\ref{prop1}}
{\propositionone}
\begin{proof}
We'll prove one side of the CipherGAN as the proof for both sides are similar.

Given bijective function between continuous relaxations of \(\X\) and \(\Y\): \(F:(\bX,d_{\bX}) \rightarrow (\bY,d_{\bY})\), where \(\X\) and \(\Y\) contain finite sequences (length \(n\)) of vectors laying on the vertices of the simplex \(\Delta^k\), and are supports of data distributions \(p_\X,p_\Y\) respectively.

Let:
\begin{itemize}
    \item \(\bX=\bY=\underbrace{\Delta^k\times\cdots\times\Delta^k}_{n}\) with \(k\) equal to the number of elements in our vocabulary.
    \item the rediscretization function \(R_\X:\bX\rightarrow\X\):
    \(R_\X(\bar{x})=\text{arg}\min_{x\in \X} d_{\bX}(\bar{x},x)\); similarly for \(R_\Y\).
\end{itemize}

Now, for each \(x\in \X\) consider the infinite set \(S_x\) with cardinality of the continuum constructed according to:
\[\bar{x} \in S_x \iff \bar{x}\in \bX,\text{ s.t. }R_\X(\bar{x})=x\text{ and } R_\Y(F(\bar{x}))=F(x)\]
Equivalently,
\[S_x = R^{-1}_\X(x) \cap F^{-1}(R^{-1}_\Y(F(x)))\]

\begin{note*}\label{note1}
\(S_x\) is never of cardinality less than the continuum since the following is implied by the definitions of \(R_\X\), \(S_x\) and the fact that \(F\) is continuous: \(x\in \X\) \(\implies x\in S_x \land \exists\) closed ball \(B_\epsilon[x]\) with radius \[0<\epsilon<\min_{\substack{z\in \bX\\R_\Y(F(z))\neq R_\Y(F(x))}}d_\X(x,z)\]
\end{note*}


So, for each element \(x\in \X\) there exists a closed set of points in \(\bX\) which are rediscretized, under \(R_\X\), to \(x\). Since \(S_x\) is a Borel Set we can sample uniformly from it. Therefore, during training suppose we replace each element of \(x\in \X\) with a sample \(\bar{x}\sim S_x\):

We begin with the discrete objective:
\[\sum_{x\in \X} p_{\X}(x) \log(\DX(x)) + p_{F}(x) \log(1-\DX(x))\]

As was noted in \citet{gan}, this objective is optimized in \({\DX}:\X\rightarrow[0,1]\) when:
\[{\DX}=\frac{p_{\X}}{p_{\X}+p_{G}}\]
which is undesirable as \(p_{\X}\) is a sum of Dirac delta distributions \(p_{\X}(x)=\sum_{x_i\in{\X}}\delta_{x_i}(x)\) and lacks a non-zero gradient to train the generator function with. Instead, let us consider a continuous relaxation \({\bDX{}}:{\bX{}}\rightarrow[0,1]\) of the discriminator \({\DX{}}\) and observe where it optimizes.


Suppose \(\forall{x}\in\X,\exists\epsilon_x\geq0,\forall\bar{x}\in{S_x}\)
\begin{equation}\label{jac}
(1-\epsilon_x)\leq|\nabla_{\bar{x}}F(\bar{x})|\leq(1+\epsilon_x)
\end{equation}
That is, suppose \(F\) is approximately volume preserving within a small region about each \(x\in\X\).

\begin{lemma}\label{lem1}
\(\forall y\in\Y, G(S_y)=S_{G(y)}\)
\end{lemma}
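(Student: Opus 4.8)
The plan is to unfold both $G(S_y)$ and $S_{G(y)}$ into their defining intersections and then match the two expressions term by term, using only that $G$ is a bijection with $G=F^{-1}$; the volume-preservation hypothesis plays no role here, so this is a purely structural identity. First I would record the symmetric version of the defining equation for a point of $\Y$: for $y\in\Y$,
\[S_y = R_\Y^{-1}(y)\cap G^{-1}\!\left(R_\X^{-1}(G(y))\right),\]
which is obtained from the displayed formula $S_x=R_\X^{-1}(x)\cap F^{-1}(R_\Y^{-1}(F(x)))$ by interchanging the roles of $\X$ and $\Y$ and of $F$ and $G$. This is exactly the symmetry invoked when the proof says ``the proof for both sides are similar,'' and the inner rediscretization condition now reads $R_\X(G(\bar{y}))=G(y)$.

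Next I would compute the right-hand side. Setting $x=G(y)$ and using $F(G(y))=y$ (since $F=G^{-1}$), the defining formula gives
\[S_{G(y)} = R_\X^{-1}(G(y)) \cap F^{-1}\!\left(R_\Y^{-1}(y)\right).\]
Then I would compute the left-hand side by pushing $G$ through the intersection in the formula for $S_y$. Because $G$ is a bijection it preserves intersections and satisfies $G(G^{-1}(A))=A$ for every set $A$, so the second factor collapses and
\[G(S_y) = G\!\left(R_\Y^{-1}(y)\right) \cap R_\X^{-1}(G(y)).\]

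Finally, I would rewrite $G(R_\Y^{-1}(y))=F^{-1}(R_\Y^{-1}(y))$ using $G=F^{-1}$, and by commutativity of intersection this coincides exactly with the expression for $S_{G(y)}$ derived above, which closes the argument. The only real obstacle is bookkeeping rather than mathematics: I must justify the set-theoretic identities $G(A\cap B)=G(A)\cap G(B)$ and $G(G^{-1}(A))=A$, both of which hold precisely because $G$ is a bijection, and I must take care to instantiate the symmetric definition of $S_y$ correctly (in particular, that the image condition is stated with $R_\X$ and $G$, not $R_\Y$ and $F$). Notably, neither the volume-preservation assumption \eqref{jac} nor the cardinality statement in the preceding Note is needed: the lemma follows from the definitions and bijectivity alone.
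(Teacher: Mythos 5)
Your proposal is correct and follows essentially the same route as the paper: both unfold the symmetric definition $S_y = R_\Y^{-1}(y)\cap G^{-1}(R_\X^{-1}(G(y)))$, push $G$ through the intersection using bijectivity (so that $G(A\cap B)=G(A)\cap G(B)$ and $G(G^{-1}(A))=A$), and identify the result with $S_{G(y)}$ via $G=F^{-1}$. Your explicit remarks that the identity is purely structural and independent of the volume-preservation hypothesis are accurate and, if anything, slightly more careful than the paper's one-line chain of equalities.
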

\vspace{-0.5cm}
\begin{proof}[Proof of Lemma~\ref{lem1}.]
For all \(y\in\Y\) with \(G(y)=x\Leftrightarrow y=F(x)\):
\begin{align*}
G(S_y)&=G[R^{-1}_\Y(y) \cap G^{-1}(R^{-1}_\X(G(y)))]\\
&=G(R^{-1}_\Y(y)) \cap G(G^{-1}(R^{-1}_\X(G(y))))\\
&=G(R^{-1}_\Y(y)) \cap R^{-1}_\X(G(y))\\
&=F^{-1}(R^{-1}_\Y(F(x))) \cap R^{-1}_\X(x)\\
&=S_x=S_{G(y)}
\end{align*}
\end{proof}

\begin{corollary}\label{cor1}
\[\int_{\bar{x}\sim S_{G(y)}}\frac{1}{|S_{G(y)}|}d\bar{x}=\int_{\bar{x}\sim G(S_y)}\frac{|\nabla_{\bar{x}}F(\bar{x})|}{|S_y|}d\bar{x}\]
\end{corollary}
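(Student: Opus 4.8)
The plan is to read this corollary as nothing more than the change-of-variables formula for the bijection $F$, with Lemma~\ref{lem1} used only to match up the two domains of integration. First I would fix $y\in\Y$ and set $x=G(y)$, so that $S_{G(y)}=S_x$ and $S_y=S_{F(x)}$. Lemma~\ref{lem1} then gives $G(S_y)=G(S_{F(x)})=S_{G(F(x))}=S_x$, so both integrals are in fact taken over the single set $S_x$; what remains is to compare the two integrands over this common domain.

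Next I would evaluate each side. Integrating the constant density $1/|S_{G(y)}|$ over $S_{G(y)}=S_x$ returns $|S_x|/|S_x|=1$; this uses that $|S_x|$ is positive and finite, which follows from the Note above, since $S_x$ contains the closed ball $B_\epsilon[x]$ (hence has positive measure) while lying inside the compact set $\bX$ (hence finite measure). For the right-hand side I would substitute $\bar{y}=F(\bar{x})$, so that the volume element transforms as $d\bar{y}=|\nabla_{\bar{x}}F(\bar{x})|\,d\bar{x}$ with $|\nabla_{\bar{x}}F(\bar{x})|$ the absolute Jacobian determinant of $F$. Applying $F$ to the identity $G(S_{F(x)})=S_x$ supplied by Lemma~\ref{lem1} shows $F(S_x)=S_{F(x)}=S_y$, so the substitution sweeps $\bar{y}$ over exactly $S_y$ and yields $\int_{S_x}|\nabla_{\bar{x}}F(\bar{x})|\,d\bar{x}=|S_y|$. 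Dividing by $|S_y|$ shows the right-hand side also equals $1$, and equating the two proves the claim.

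It is worth stressing that this identity is exact and does not use the approximate volume-preservation hypothesis~\eqref{jac}; that hypothesis enters only later, when one wants the two \emph{densities} themselves (not merely their total masses) to be close. The only delicate point I anticipate is one of regularity and dimension: one must ensure the change-of-variables theorem genuinely applies, i.e.\ that $F$ is a sufficiently regular bijection (a $C^1$, or at least Lipschitz, diffeomorphism) for $|\nabla_{\bar{x}}F(\bar{x})|$ to be defined almost everywhere and for the volume element to transform as stated, and that every volume $|\cdot|$ is understood with respect to the intrinsic lower-dimensional Hausdorff measure on the product of simplices $\bX$ rather than the ambient Lebesgue measure. Once these measure-theoretic conventions are pinned down, the computation above is immediate.
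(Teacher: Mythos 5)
Your proposal is correct and follows essentially the same route as the paper: both sides are shown to equal $1$, the left by integrating the normalized uniform density over its own support and the right by the change of variables $\bar{y}=F(\bar{x})$ with Jacobian $|\nabla_{\bar{x}}F(\bar{x})|=|\nabla_{\bar{x}}G^{-1}(\bar{x})|$. The only cosmetic difference is that you invoke Lemma~\ref{lem1} to identify the domains as a single set $S_x$ before integrating, whereas the paper simply writes the chain $\text{LHS}=1=\int_{S_y}|S_y|^{-1}d\bar{y}=\text{RHS}$ without needing the lemma inside this corollary; your added regularity caveats are sensible but do not change the argument.
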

\vspace{-0.5cm}
\begin{proof}[Proof of Corollary~\ref{cor1}]
\begin{align*}
\int_{\bar{x}\sim S_{G(y)}}\frac{1}{|S_{G(y)}|}d\bar{x}=1&=\int_{\bar{y}\sim S_y}\frac{1}{|S_y|}d\bar{y}\\
&=\int_{\bar{x}\sim G(S_y)}\frac{|\nabla_{\bar{x}}G^{-1}(\bar{x})|}{|S_y|}d\bar{x}\\
&=\int_{\bar{x}\sim G(S_y)}\frac{|\nabla_{\bar{x}}F(\bar{x})|}{|S_y|}d\bar{x}
\end{align*}
\end{proof}

\begin{corollary}\label{cor2}
\[\frac{1}{(1-\epsilon_x)|S_{G(y)}|}\geq\frac{1}{|S_y|}\geq\frac{1}{(1+\epsilon_x)|S_{G(y)}|}\]
\end{corollary}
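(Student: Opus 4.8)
The plan is to derive Corollary~\ref{cor2} directly by sandwiching the integral identity already established in Corollary~\ref{cor1} between the volume-preservation bounds of Equation~\ref{jac}. First I would rewrite the final line of Corollary~\ref{cor1} as the clean identity
\[|S_y| = \int_{\bar{x}\in G(S_y)}|\nabla_{\bar{x}}F(\bar{x})|\,d\bar{x},\]
obtained by clearing the factor \(1/|S_y|\) from the equality \(1 = \int_{\bar{x}\sim G(S_y)}\frac{|\nabla_{\bar{x}}F(\bar{x})|}{|S_y|}d\bar{x}\). I would also record the trivial fact, implicit in the first line of that same proof, that \(|S_{G(y)}| = \int_{\bar{x}\in S_{G(y)}} d\bar{x}\) is simply the measure of the set \(S_{G(y)}\).

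Next I would invoke Lemma~\ref{lem1} to replace the domain of integration: since \(G(S_y) = S_{G(y)}\), the identity becomes \(|S_y| = \int_{\bar{x}\in S_{G(y)}}|\nabla_{\bar{x}}F(\bar{x})|\,d\bar{x}\). Writing \(x = G(y)\), the region \(S_{G(y)}\) is exactly \(S_x\), which is precisely where the approximate volume-preservation hypothesis of Equation~\ref{jac} applies, giving the pointwise bound \((1-\epsilon_x) \leq |\nabla_{\bar{x}}F(\bar{x})| \leq (1+\epsilon_x)\) for every \(\bar{x} \in S_{G(y)}\).

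Integrating these pointwise bounds over \(S_{G(y)}\) and using \(\int_{S_{G(y)}} d\bar{x} = |S_{G(y)}|\) then yields
\[(1-\epsilon_x)|S_{G(y)}| \leq |S_y| \leq (1+\epsilon_x)|S_{G(y)}|.\]
Taking reciprocals of these positive quantities reverses both inequalities and produces exactly the claimed chain \(\frac{1}{(1-\epsilon_x)|S_{G(y)}|} \geq \frac{1}{|S_y|} \geq \frac{1}{(1+\epsilon_x)|S_{G(y)}|}\), completing the proof.

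I expect the argument itself to be routine once Corollary~\ref{cor1} and Lemma~\ref{lem1} are in hand; the only points requiring care are bookkeeping and positivity. On the bookkeeping side, I must track that the change of variables in Corollary~\ref{cor1} lands us on \(G(S_y)\), that Lemma~\ref{lem1} rewrites this as \(S_{G(y)} = S_x\), and that this is the set on which the volume-preservation bound was assumed, with \(\epsilon_x\) the single constant attached to \(x = G(y)\) throughout. On the positivity side, the reciprocal step preserves signs only because \(|S_y|\) and \(|S_{G(y)}|\) have positive measure and \(\epsilon_x < 1\), which is guaranteed by the hypothesis that \(F\) is \emph{approximately} volume preserving in a small region; I would note this so that \(1-\epsilon_x > 0\) and the sign flip is legitimate.
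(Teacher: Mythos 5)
Your proposal is correct and takes essentially the same route as the paper's proof: both sandwich the Jacobian factor from Equation~\ref{jac} inside the integral identity of Corollary~\ref{cor1}, integrate over \(S_{G(y)}\) (identified with \(G(S_y)\) via Lemma~\ref{lem1}) to obtain \((1-\epsilon_x)|S_{G(y)}|\leq|S_y|\leq(1+\epsilon_x)|S_{G(y)}|\), and then take reciprocals. Your explicit remarks that Lemma~\ref{lem1} licenses the change of integration domain and that \(\epsilon_x<1\) together with positive measure is needed for the reciprocal step are details the paper leaves implicit, but they do not alter the argument.
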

\vspace{-0.5cm}
\begin{proof}[Proof of Corollary~\ref{cor2}]
By Equation~\ref{jac} and Corollary~\ref{cor1}:
\begin{gather*}
\int_{\bar{x}\sim S_{G(y)}}\frac{1-\epsilon_x}{|S_y|}d\bar{x}\leq\int_{\bar{x}\sim S_{G(y)}}\frac{1}{|S_{G(y)}|}d\bar{x}\leq\int_{\bar{x}\sim S_{G(y)}}\frac{1+\epsilon_x}{|S_y|}d\bar{x}\\
\implies\frac{1-\epsilon_x}{|S_y|}|S_{G(y)}|\leq1\leq\frac{1+\epsilon_x}{|S_y|}|S_{G(y)}|\\
\implies(1-\epsilon_x)|S_{G(y)}|\leq|S_y|\leq(1+\epsilon_x)|S_{G(y)}|\\
\implies\frac{1}{(1-\epsilon_x)|S_{G(y)}|}\geq\frac{1}{|S_y|}\geq\frac{1}{(1+\epsilon_x)|S_{G(y)}|}
\end{gather*}
\end{proof}

Corollary~\ref{cor1} leads to the following:
\begin{align}\label{root}
    &\mathbb{E}_{x\in\X}[\mathbb{E}_{\bar{x}\in{S_x}}[\log({\bDX}(\bar{x}))]] + \mathbb{E}_{y\in\Y}[\mathbb{E}_{\bar{y}\in{S_y}}[\log(1-{\bDX}(G(\bar{y})))]]\nonumber\\
    =&\sum_{x\in \X} p_{\X}(x) \int_{\bar{x}\sim S_x} \frac{1}{|S_x|}\log({\bDX}(\bar{x}))d\bar{x} + \sum_{y\in \Y} p_{\Y}(y) \int_{\bar{y}\sim S_y} \frac{1}{|S_y|}\log(1-{\bDX}(G(\bar{y})))d\bar{y}\nonumber\\
    =&\sum_{x\in \X} p_{\X}(x) \int_{\bar{x}\sim S_x} \frac{1}{|S_x|}\log({\bDX}(\bar{x}))d\bar{x} + \sum_{y\in \Y} p_{\Y}(y) \int_{\bar{x}\sim G(S_y)}\frac{|\nabla_{\bar{x}}F(\bar{x})|}{|S_y|}\log(1-{\bDX}(\bar{x}))d\bar{x}
\end{align}
Using Lemma~\ref{lem1} and Corollary~\ref{cor2}, we obtain the following lower-bound of Equation~\ref{root}:
\begin{align*}
    \geq&\sum_{x\in \X} p_{\X}(x) \int_{\bar{x}\sim S_x} \frac{1}{|S_x|}\log({\bDX}(\bar{x}))d\bar{x} + \sum_{y\in \Y} p_{\Y}(y) \int_{\bar{x}\sim G(S_y)}\frac{1-\epsilon_x}{|S_y|}\log(1-{\bDX}(\bar{x}))d\bar{x}\\
    \geq&\sum_{x\in \X} p_{\X}(x) \int_{\bar{x}\sim S_x} \frac{1}{|S_x|}\log({\bDX}(\bar{x}))d\bar{x}\\
    \begin{split}
        \qquad\qquad&+ p_{\Y}(F(x)) \int_{\bar{x}\sim S_{G(F(x))}}\frac{1-\epsilon_x}{(1+\epsilon_x)|S_{G(F(x))}|}\log(1-{\bDX}(\bar{x}))d\bar{x}\\
    \end{split}\\
    =&\sum_{x\in \X}\int_{\bar{x}\sim S_x}\frac{1}{|S_x|}\left[p_{\X}(x)\log({\bDX}(\bar{x})) + \frac{1-\epsilon_x}{1+\epsilon_x}p_{G}(x)\log(1-{\bDX}(\bar{x}))\right]d\bar{x}\\
\end{align*}
Which is maximal at:
\begin{gather*}
{\bDX}(\bar{x}\sim S_x)=\frac{p_{\X}}{p_{\X}+\frac{1-\epsilon_x}{1+\epsilon_x}p_{G}}\stackrel{\epsilon_x}{\approx}\DX\\
\implies\mathbb{E}_{\bar{x}\sim S_x}[{\bDX}(\bar{x})]\stackrel{\epsilon_x}{\approx}\DX
\end{gather*}
And similarly, we find Equation~\ref{root} is upper-bounded by:
\begin{align*}
    \leq&\sum_{x\in \X} p_{\X}(x) \int_{\bar{x}\sim S_x} \frac{1}{|S_x|}\log({\bDX}(\bar{x}))d\bar{x} + \sum_{y\in \Y} p_{\Y}(y) \int_{\bar{x}\sim G(S_y)}\frac{1+\epsilon_x}{|S_y|}\log(1-{\bDX}(\bar{x}))d\bar{x}\\
    \leq&\sum_{x\in \X} p_{\X}(x) \int_{\bar{x}\sim S_x} \frac{1}{|S_x|}\log({\bDX}(\bar{x}))d\bar{x}\\
    \begin{split}
        &\qquad\qquad+ p_{\Y}(F(x)) \int_{\bar{x}\sim S_{G(F(x))}}\frac{1+\epsilon_x}{(1-\epsilon_x)|S_{G(F(x))}|}\log(1-{\bDX}(\bar{x}))d\bar{x}
    \end{split}\\
    =&\sum_{x\in \X}\int_{\bar{x}\sim S_x}\frac{1}{|S_x|}\left[p_{\X}(x)\log({\bDX}(\bar{x})) + \frac{1+\epsilon_x}{1-\epsilon_x}p_{G}(x)\log(1-{\bDX}(\bar{x}))\right]d\bar{x}\\
\end{align*}
Which is maximal in \(D_{\bX}\) at:
\begin{gather*}
{\bDX}(\bar{x}\sim S_x)=\frac{p_{\X}}{p_{\X}+\frac{1+\epsilon_x}{1-\epsilon_x}p_{G}}\stackrel{\epsilon_x}{\approx}\DX\\
\implies\mathbb{E}_{\bar{x}\sim S_x}[{\bDX}(\bar{x})]\stackrel{\epsilon_x}{\approx}\DX
\end{gather*}
Hence, asymptotically as \(F\) becomes approximately volume-preserving about each discrete \(x\in\X\) the bounds maximize to the same loss value in the same class of functions \(\mathbb{E}_{\bar{x}\sim S_x}[{\bDX}(\bar{x})]=\DX\), application of the squeeze theorem concludes the proof.
\end{proof}

\end{document}